\setlist{nosep}
\newcommand{\RR}{\mathbb{R}}
\newcommand{\CC}{\mathbb{C}}
\newcommand{\NN}{\mathbb{N}}
\newcommand{\ZZ}{\mathbb{Z}}
\newcommand{\nrm}[1]{\lvert #1 \rvert}
\newcommand{\cA}{\mathcal{A}}
\newcommand{\cB}{\mathcal{B}}
\newcommand{\cP}{\mathcal{P}}
\newcommand{\cQ}{\mathcal{Q}}
\DeclareMathOperator{\Hom}{Hom}
\numberwithin{equation}{section}
\theoremstyle{plain}
\newtheorem{theorem}[equation]{Theorem}
\newtheorem{lemma}[equation]{Lemma}
\newtheorem{corollary}[equation]{Corollary}
\newtheorem{proposition}[equation]{Proposition}
\newtheorem{problem}[equation]{Problem}
\theoremstyle{definition}
\newtheorem{definition}[equation]{Definition}
\theoremstyle{remark}
\newtheorem{remark}[equation]{Remark}
\title{Fast computation of
permutation equivariant layers 
with the partition algebra
}
\author{Charles Godfrey,\(^1\) Michael G. Rawson,\(^1\) Davis Brown,\(^1\) and Henry Kvinge\(^1, ^2\)\\
\(^1\)Pacific Northwest National Laboratory, \(^2\) Department of Mathematics, University of Washington\\
\texttt{\{first\}.\{last\}@pnnl.gov} 
}
\newcommand{\todo}[1]{}
\newcommand{\hjk}[1]{}
\DeclareMathOperator{\tensordot}{dot}
\DeclareMathOperator{\set}{set}
\begin{document}

\maketitle

\begin{abstract}
Linear neural network layers that are either equivariant or invariant to permutations of their inputs form core building blocks of modern deep learning architectures. Examples include the layers of DeepSets, as well as linear layers occurring in attention blocks of transformers and some graph neural networks. The space of permutation equivariant linear layers can be identified as the invariant subspace of a certain symmetric group representation, and recent work parameterized this space by exhibiting a basis whose vectors are sums over orbits of standard basis elements with respect to the symmetric group action. A parameterization opens up the possibility of learning the weights of permutation equivariant linear layers via gradient descent. The space of permutation equivariant linear layers is a generalization of the partition algebra, an object first discovered in  statistical physics with deep connections to the representation theory of the symmetric group, and the basis described above generalizes the so-called orbit basis of the partition algebra. We exhibit an alternative basis, generalizing the diagram basis of the partition algebra, with computational benefits stemming from the fact that the tensors making up the basis are low rank in the sense that they naturally factorize into Kronecker products. Just as multiplication by a rank one matrix is far less expensive than multiplication by an arbitrary matrix, multiplication with these low rank tensors is far less expensive than multiplication with elements of the orbit basis. Finally, we describe an algorithm implementing multiplication with these basis elements.
\end{abstract}


\section{Introduction}

Invariance or equivariance to application-driven symmetry groups has served as
a guiding light for the design of neural network architectures for over two
decades, dating back at least to the introduction of convolutional networks \citep{fukushimaNeocognitronSelforganizingNeural1980}.
In the case where the underlying symmetries are permutations, several families
of architectures have appeared in the last five years: DeepSets \citep{zaheerDeepSets2018} (and its
successors), neural networks operating on graphs (where invariance to node
permutations is a natural desiderata) \citep{maronINVARIANTEQUIVARIANTGRAPH2019}, and transformers \citep{vaswaniAttentionAllYou2017a}. 

Permutation equivariant linear layers are linear maps \((\RR^n)^{\otimes m} \to (\RR^n)^{\otimes m'} \) 
equivariant with respect to the symmetric group \(\Sigma_n \) acting ``diagonally'' on tensors on the domain and target
(\cref{sec:permeqlayer} includes background and more detailed definitions).
These include linear layers used in DeepSets and some layers of transformers (for the later case see \citep{Kim2021TransformersGD}, \cite[\S 5.4]{bronsteinGeometricDeepLearning2021a}) as special
cases and as such are a fundamental building block for permutation equivariant/invariant
models. The result of \cite[Thm.
1]{maronINVARIANTEQUIVARIANTGRAPH2019}, which computes a basis for the vector
space parameterizing permutation equivariant linear layers, opens up the possibility of
\emph{learning} these layers, i.e. treating the parameters as
weights and optimizing them with some form of gradient descent.

There are good reasons related to computational efficiency  for avoiding the basis introduced in \citep{maronINVARIANTEQUIVARIANTGRAPH2019}: under a natural identification, permutation equivariant linear layers can be identified as tensors living in \((\RR^n)^{\otimes (m+m')}\). Just as multiplication by a rank one matrix of the form \(u v^T \), where \(u \) and \(v \) are vectors, can be executed far faster with sequential dot products than multiplication with an arbitrary matrix, contraction with a tensor in \((\RR^n)^{\otimes (m+m')}\) that decomposes as a Kronecker product, say \(u \otimes v \) where \(u \in (\RR^n)^{\otimes p}, v \in (\RR^n)^{\otimes p'} \) and \(p + p' = m+m' \), will generally be less expensive than contraction with an arbitrary tensor in \((\RR^n)^{\otimes (m+m')}\)
. Unfortunately, many of the basis vectors found in \cite[Thm. 1]{maronINVARIANTEQUIVARIANTGRAPH2019} lack such a decomposition (an example is given in \cref{sec:examples}).

Our main result, \cref{thm:alt-basis} below, exhibits an alternative basis for permutation equivariant linear layers in which all but one basis vector are explicitly constructed as Kronecker products. 
In this construction, we exploit the fact that the direct sum of the vector spaces of permutation equivariant linear layers, 
ranging from zero to infinity, forms an \emph{algebra} in which multiplication is the Kronecker product. 

In the case where \(m = m' \), this basis is known: permutation equivariant
layers \((\RR^n)^{\otimes m} \to (\RR^n)^{\otimes m} \) are exactly the
\emph{partition algebra}, an object first discovered in the context of
statistical mechanics in the mid 1990s
that has since been extensively studied
by mathematicians \citep{halversonPartitionAlgebras2005,benkartPartitionAlgebrasInvariant2017}
, and the basis of \cref{thm:alt-basis} is simply the ``diagram
basis'' of the partition algebra (that of \cite[Thm.
1]{maronINVARIANTEQUIVARIANTGRAPH2019} is known as the ``orbit sum'' basis). We
view this connection as a beautiful example of an idea that originated in
physics and made its way into deep learning architectures which are now being
applied to model physical phenomena, for example predicting molecular
geometry using the ZINC dataset \citep{zinc}.

For the sake of concreteness we work over \(\RR \) throughout, but all mathematical results in this paper apply over any field, including the complex numbers \(\CC \) and the finite field \(\ZZ/2\). This is not generality for generality's sake, as the latter is used in quantized neural networks.

The contributions of this paper include:
\begin{enumerate*}[(i)]
    \item an explicit construction of an alternative basis for the space of permutation equivariant linear layers,
    \item verification that in the case where the  permutation equivariant linear layers are the partition algebra, our basis coincides with the diagram basis, and
    \item outlining an algorithm implementing multiplication with these basis elements, and in the process showing they recover the efficient operations of \citep{panPermutationEquivariantLayers2022}.
\end{enumerate*}

\section{Related Work}
In \cref{sec:algorithm}, we show that the basis of \cref{thm:alt-basis} recovers the operations described in \cite[\S 4-5]{panPermutationEquivariantLayers2022}. Note however that while their work does prove that the set of ``sum/transfer/broadcast'' tensors, that they define, has the same cardinality as the basis of \cite[Thm. 1]{maronINVARIANTEQUIVARIANTGRAPH2019}, it does \emph{not} demonstrate that those ``sum/transfer/broadcast'' tensors are linearly independent nor that they span the space of permutation equivariant linear layers. 

By  \cite[Appendix]{panPermutationEquivariantLayers2022}, the basis of \cref{thm:alt-basis} also coincides with the one used in \cite[Appendix]{maronINVARIANTEQUIVARIANTGRAPH2019}. However, the latter authors also omit a proof of linear independence/spanning. 
The dimension of the space of permutation-equivariant linear layers grows extremely rapidly with \(m+m'\) and (for sufficiently large \(n \)) equals the Bell number \(B(m + m') \), see \cite[Thm. 1]{maronINVARIANTEQUIVARIANTGRAPH2019}. 
Thus while the case of \(m = m' =2 \) considered in their work may have been tractable as a one-off case, those of higher \(m + m' \) will not be. 

The partition algebra was first discovered (independently) in \citep{martin1991potts,jonesPottsModelSymmetric1994}. Its structure (in terms of generators and relations) and connections with representation theory of the symmetric group was identified in \citep{halversonPartitionAlgebras2005} (see also \citep{benkartPartitionAlgebrasInvariant2017} for more recent developments). In the case \(m = m' \), our \cref{thm:alt-basis} just gives an explicit description of the aforementioned diagram basis in terms of Kronecker products of diagonal tensors.

After completing this work, we became aware of the recent article \citep{pearce-crumpConnectingPermutationEquivariant2023}, which also points out the connection between permutation equivariant linear layers and partition algebras. This article has technical overlap with ours in \cref{sec:permeqlayer} (which overlap with multiple references for that matter), however it does not include our main results (those of \cref{sec:tensor-alg,sec:new-basis}), nor algorithms for multiplying with tensors representing permutation equivariant maps such as those of \cref{sec:algorithm}, nor connections with the operations of \cite[\S 4-5]{panPermutationEquivariantLayers2022}.

\section{Background} 
\label{sec:permeqlayer}

We fix a natural number \(n \in \NN \). In applications, this is the cardinality
of the set on which a permutation equivariant/invariant model operates. The vector space
\(\RR^n \) comes with a natural action of the symmetric group on \(n\) elements,
denoted \(\Sigma_n\), permuting the basis vectors \(\{e_1, \dots, e_n\}\):
\begin{equation}
    \sigma \cdot e_i = e_{\sigma(i)} 
    \text{  for  } \sigma \in \Sigma_n, i = 1, \dots, n.
\end{equation}
For each natural number \(m \in \NN\) the \(m\)-th tensor power 
\begin{equation}
    (\RR^n)^{\otimes m} = \underbrace{\RR^n \otimes \RR^n \otimes \cdots \otimes \RR^n}_{\text{\(m\) times}}
\end{equation}
is also a representation of \(\Sigma_n \) with the ``diagonal'' action, defined
for a \(\sigma \in \Sigma_n\) and basis tensor \(v_1 \otimes \cdots \otimes v_m
\in (\RR^n)^{\otimes m}\) as \(\sigma \cdot (v_1 \otimes \cdots \otimes v_m) := (\sigma \cdot v_1) \otimes \cdots \otimes (\sigma \cdot v_m)\).
\begin{remark}
    When \(m = 0 \) we adopt the convention
    \((\RR^n)^{\otimes m} = \RR \) with the trivial \(\Sigma_n\) action.
\end{remark}
We denote the vector space of \emph{all} (not necessarily equivariant) linear maps \((\RR^n)^{\otimes m} \to (\RR^n)^{\otimes m'} \) by \(\Hom((\RR^n)^{\otimes m}, (\RR^n)^{\otimes m'}) \). A linear map \( \varphi \in  \Hom((\RR^n)^{\otimes m}, (\RR^n)^{\otimes m'})  \) is \(\Sigma_n\)-equivariant if and only if \( \varphi (\sigma \cdot v) = \sigma \cdot \varphi(v) \) for all \( \sigma \in \Sigma_n, v \in (\RR^n)^{\otimes m} \), and we denote the vector space of such \(\Sigma_n\)-equivariant linear maps  \(\Hom((\RR^n)^{\otimes m}, (\RR^n)^{\otimes m'})^{\Sigma_n} \).
\begin{problem}
\label{item:param} 
    Parameterize \(\Hom((\RR^n)^{\otimes m}, (\RR^n)^{\otimes m'})^{\Sigma_n}  \), for
    example by giving a basis for it as a subspace of \(\Hom((\RR^n)^{\otimes m}, (\RR^n)^{\otimes m'}) \).
\end{problem}
The next two elementary results will be useful in what follows. 
Proofs are deferred to the appendix.
\begin{lemma}[{cf. \cite[\S 3]{halversonPartitionAlgebras2005}}]
    \label{lem:hom-tensor}
    \(\Hom((\RR^n)^{\otimes m}, (\RR^n)^{\otimes m'})^{\Sigma_n}\) is isomorphic
    to the \(\Sigma_n\)-invariant subspace of the tensor power
    \((\RR^n)^{\otimes (m+m')}\).
\end{lemma}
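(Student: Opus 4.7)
The plan is to use the standard natural isomorphism $\Hom(V,W) \cong V^{\ast} \otimes W$ for finite-dimensional vector spaces, upgraded to an isomorphism of $\Sigma_n$-representations, and then take $\Sigma_n$-invariants on both sides.

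First I would recall that for any finite-dimensional representations $V, W$ of a group $G$, the vector space $\Hom(V, W)$ carries a $G$-action via $(g \cdot \varphi)(v) := g \cdot \varphi(g^{-1} \cdot v)$, the invariant subspace $\Hom(V, W)^G$ is precisely the space of $G$-equivariant maps, and the canonical isomorphism $\Hom(V, W) \cong V^{\ast} \otimes W$ is $G$-equivariant when $V^{\ast}$ is given the contragredient action. Applying this with $V = (\RR^n)^{\otimes m}$ and $W = (\RR^n)^{\otimes m'}$ yields a $\Sigma_n$-equivariant isomorphism
\begin{equation}
\Hom((\RR^n)^{\otimes m}, (\RR^n)^{\otimes m'}) \;\cong\; ((\RR^n)^{\otimes m})^{\ast} \otimes (\RR^n)^{\otimes m'}.
\end{equation}

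Next I would exploit the fact that the defining representation $\RR^n$ of $\Sigma_n$ is self-dual: the standard basis $\{e_1, \dots, e_n\}$ is permuted by $\Sigma_n$, so the permutation matrices are orthogonal, hence the map $\RR^n \to (\RR^n)^{\ast}$ sending $e_i \mapsto e_i^{\ast}$ is a $\Sigma_n$-equivariant isomorphism. Tensoring $m$ copies of this map and using the canonical identification $(V^{\otimes m})^{\ast} \cong (V^{\ast})^{\otimes m}$ (also $G$-equivariant) gives a $\Sigma_n$-equivariant isomorphism $((\RR^n)^{\otimes m})^{\ast} \cong (\RR^n)^{\otimes m}$. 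Combining with the previous display produces a $\Sigma_n$-equivariant isomorphism
\begin{equation}
\Hom((\RR^n)^{\otimes m}, (\RR^n)^{\otimes m'}) \;\cong\; (\RR^n)^{\otimes (m+m')}.
\end{equation}

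Finally, since taking $\Sigma_n$-invariants is a functor on $\Sigma_n$-representations, any equivariant isomorphism restricts to an isomorphism between invariant subspaces, yielding the claim. The proof is essentially a chain of natural isomorphisms, so there is no serious obstacle; the only point worth being careful about is the self-duality step, since for a general group representation $V \not\cong V^{\ast}$, but here the permutation action makes it immediate. If desired, one can give an even more hands-on argument by checking that under the isomorphism above, the basis tensor $e_{i_1} \otimes \cdots \otimes e_{i_m} \otimes e_{j_1} \otimes \cdots \otimes e_{j_{m'}}$ corresponds to the elementary linear map sending $e_{i_1} \otimes \cdots \otimes e_{i_m}$ to $e_{j_1} \otimes \cdots \otimes e_{j_{m'}}$ and all other basis tensors to zero, and that the $\Sigma_n$-action permutes such basis tensors diagonally on both sides identically.
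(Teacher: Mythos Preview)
Your proof is correct and follows essentially the same route as the paper: identify $\Hom(V,W)\cong V^\vee\otimes W$ as $\Sigma_n$-representations, use self-duality of the permutation representation $(\RR^n)^{\otimes m}$ via the standard (equivariant) inner product, and then take $\Sigma_n$-invariants. The only cosmetic difference is that the paper phrases self-duality in terms of the explicit inner product $\langle e_I, e_J\rangle = \prod_k \delta_{i_k j_k}$ rather than orthogonality of permutation matrices, but this is the same content.
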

\begin{corollary}
    \label{cor:hom-tensor}
    The dimension of \( \Hom((\RR^n)^{\otimes m}, (\RR^n)^{\otimes
    m'})^{\Sigma_n}\) is a function of the \textbf{sum} \(m+ m'\) (for fixed \(n\)). Moreover, given any
    parametrization of the \(\Sigma_n\)-invariant subspace of \((\RR^n)^{\otimes
    l}\) for some \( l\in \NN \) there is a simple recipe to produce
    parametrizations of \(\Hom((\RR^n)^{\otimes m}, (\RR^n)^{\otimes
    m'})^{\Sigma_n}  \) for all \(m + m' = l\).
\end{corollary}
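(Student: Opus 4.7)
The first sentence of the corollary is essentially a direct consequence of \cref{lem:hom-tensor}. Indeed, if $m+m' = m'' + m''' = l$, then by the lemma both $\Hom((\RR^n)^{\otimes m}, (\RR^n)^{\otimes m'})^{\Sigma_n}$ and $\Hom((\RR^n)^{\otimes m''}, (\RR^n)^{\otimes m'''})^{\Sigma_n}$ are isomorphic to $((\RR^n)^{\otimes l})^{\Sigma_n}$, and therefore isomorphic to one another; in particular they have equal dimension. So the plan for this part is a single invocation of the lemma.

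For the ``simple recipe'' part, the plan is to make the isomorphism in \cref{lem:hom-tensor} explicit, so that a basis on the invariants side can be transported to a basis on the $\Hom$ side. The standard way to construct the isomorphism is through the composition
\begin{equation*}
    \Hom((\RR^n)^{\otimes m}, (\RR^n)^{\otimes m'}) \;\cong\; ((\RR^n)^{\otimes m})^{*} \otimes (\RR^n)^{\otimes m'} \;\cong\; (\RR^n)^{\otimes (m+m')},
\end{equation*}
where the last identification uses the self-duality of $\RR^n$ as a $\Sigma_n$-representation (concretely, $e_i \mapsto e_i^{*}$, which is $\Sigma_n$-equivariant because $\Sigma_n$ permutes the standard basis). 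Restricting to $\Sigma_n$-fixed vectors yields the isomorphism in the lemma, and the point for the corollary is that the composite only depends on the total number of tensor factors $m+m'$, not on how they are split between the domain and target.

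The recipe then reads as follows. Given a parametrization $\{T_\alpha\}$ of $((\RR^n)^{\otimes l})^{\Sigma_n}$, write each $T_\alpha = \sum T_\alpha^{i_1 \cdots i_l} \, e_{i_1} \otimes \cdots \otimes e_{i_l}$, and for any decomposition $l = m + m'$ define a linear map $\varphi_{\alpha, m, m'} \in \Hom((\RR^n)^{\otimes m}, (\RR^n)^{\otimes m'})$ by
\begin{equation*}
    \varphi_{\alpha, m, m'}(e_{i_1} \otimes \cdots \otimes e_{i_m}) \;=\; \sum_{j_1, \ldots, j_{m'}} T_\alpha^{i_1 \cdots i_m \, j_1 \cdots j_{m'}} \, e_{j_1} \otimes \cdots \otimes e_{j_{m'}}.
\end{equation*}
Then $\{\varphi_{\alpha, m, m'}\}$ parameterizes $\Hom((\RR^n)^{\otimes m}, (\RR^n)^{\otimes m'})^{\Sigma_n}$. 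Verifying this amounts to unwinding that the above formula is exactly the image of $T_\alpha$ under the inverse of the composite isomorphism, and that the composite is $\Sigma_n$-equivariant (which boils down to the diagonal action and the $\Sigma_n$-equivariance of $e_i \mapsto e_i^{*}$).

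I do not anticipate a real obstacle: once \cref{lem:hom-tensor} is in hand and one is willing to spell out the reshape/partial-contraction isomorphism, both parts of the corollary are formal. The only mild bookkeeping issue is ensuring equivariance is preserved at each step, but this follows because $\Sigma_n$ acts by permuting the standard basis so that every identification involved commutes with the action.
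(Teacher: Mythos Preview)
Your proposal is correct and follows the same approach as the paper: invoke the isomorphism of \cref{lem:hom-tensor} and transport a basis along it. The paper's own proof is actually terser than yours---it simply notes that the invertible map of \cref{lem:hom-tensor} carries a parametrization of \(\big((\RR^n)^{\otimes l}\big)^{\Sigma_n}\) to one of \(\Hom((\RR^n)^{\otimes m}, (\RR^n)^{\otimes m'})^{\Sigma_n}\)---whereas you additionally spell out the explicit reshape/contraction formula and the equivariance check, which is a welcome elaboration but not a different argument.
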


\Cref{lem:hom-tensor,cor:hom-tensor} reduce \cref{item:param} to the computation of the invariant subspace \(( (\RR^n)^{\otimes l}
)^{\Sigma_n}\). Now we give our notation for partitions. 
Given a tuple \((i_1, \dots, i_l) \in \{1, \dots, n\}^l \), define
subsets \( S(i_1, \dots, i_l)_j \subseteq \{1,\dots,l\} \) for \(j=1,\dots, n\) by 
\(S(i_1, \dots, i_l)_j = \{ k \in \{1, \dots, l\} \, | \, i_k = j \}\). By construction these subsets are pairwise disjoint and their union is \( \{1,\dots,l\}\); thus, they form a \emph{set partition of \( \{1,\dots,l\}\)}, which we will denote by 
\begin{equation}
    \Pi(i_1, \dots, i_l) = \{S(i_1, \dots, i_l)_j \, | \, j = 1, \dots, n\}.
\end{equation}
In this way we obtain a map from tuples \((i_1, \dots, i_l) \in \{1, \dots, n\}^l \) to set partitions \(\Pi(i_1, \dots, i_l)  \) of \( \{1,\dots,l\}\). The next lemma shows that the partition \(\Pi(i_1, \dots, i_l)  \) uniquely characterizes the \(\Sigma_n\)-orbit of \((i_1, \dots, i_l) \). For an illustration of the lemma in an explicit example, see \cref{sec:examples}.
\begin{lemma}[{cf. \cite[\S 1]{jonesPottsModelSymmetric1994},  \cite[\S 5.2]{benkartPartitionAlgebrasInvariant2017}}]
    \label{lem:bell-nos}
    The tuples \((i_1, \dots, i_l)\) and \((i'_1, \dots, i'_l)\)
    lie in the same \(\Sigma_n\) orbit if and only if they give rise to the same
    partition of \(\{1, \dots, l\}\), i.e. \(\Pi(i_1, \dots, i_l) = \Pi(i'_1, \dots, i'_l) \). 
\end{lemma}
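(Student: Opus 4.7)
The plan is to prove both directions separately, leveraging the explicit description: for a tuple $(i_1, \dots, i_l)$, the block $S(i_1,\dots,i_l)_j$ is literally the set of positions $k$ where the label $j$ appears, so the $\Sigma_n$-action (which relabels but does not move positions) interacts with these blocks in a transparent way.

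For the forward direction, suppose $(i'_1, \dots, i'_l) = \sigma \cdot (i_1, \dots, i_l)$, i.e. $i'_k = \sigma(i_k)$ for every $k$. I would chase definitions to show $S(i'_1,\dots,i'_l)_{\sigma(j)} = S(i_1,\dots,i_l)_j$ for every $j$: an index $k$ lies in the left-hand side iff $i'_k = \sigma(j)$ iff $\sigma(i_k) = \sigma(j)$ iff $i_k = j$, using injectivity of $\sigma$. Since $\sigma$ permutes $\{1,\dots,n\}$, the collection of blocks $\{S(i'_1,\dots,i'_l)_{j'}\}_{j'=1}^n$ is just a relabelling of $\{S(i_1,\dots,i_l)_j\}_{j=1}^n$, and the unordered set partitions (which forget both the labels and the empty blocks) are therefore equal: $\Pi(i_1,\dots,i_l) = \Pi(i'_1,\dots,i'_l)$.

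For the reverse direction, suppose the set partitions coincide. Each nonempty block $B$ of this common partition has a unique label $j_B$ with $S(i_1,\dots,i_l)_{j_B} = B$ and a unique label $j'_B$ with $S(i'_1,\dots,i'_l)_{j'_B} = B$, because the nonempty $S(\cdot)_j$ are pairwise disjoint. Define $\sigma(j_B) = j'_B$ on the subset $J := \{j_B\} \subseteq \{1,\dots,n\}$. Both assignments $B \mapsto j_B$ and $B \mapsto j'_B$ are injective, so $|J| = |\{j'_B\}|$ and hence $|\{1,\dots,n\} \setminus J| = |\{1,\dots,n\} \setminus \{j'_B\}|$; choosing any bijection between these complements extends $\sigma$ to an element of $\Sigma_n$. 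By construction, for every $k$, if $i_k = j_B$ where $B$ is the block containing $k$, then $i'_k = j'_B = \sigma(i_k)$, so $(i'_1,\dots,i'_l) = \sigma \cdot (i_1,\dots,i_l)$.

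The main obstacle is purely bookkeeping: the labels $j$ used by the two tuples need not match, so one must be careful to define $\sigma$ on labels (not on blocks or positions) and to extend it to a full permutation of $\{1,\dots,n\}$ rather than a bijection between only the labels that actually appear. Once that distinction is handled, both implications reduce to unwinding the definition of $\Pi(\cdot)$.
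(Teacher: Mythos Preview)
Your argument is correct: the forward direction is a clean definition-chase, and in the reverse direction you correctly build \(\sigma\) on the labels that actually occur and then extend it to all of \(\{1,\dots,n\}\), which is exactly the bookkeeping point that needs care.

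The paper does not actually prove this lemma: it observes that the argument for general \(l\) is identical to the \(l\)-even case treated in the partition-algebra literature and simply cites \cite[\S 1]{jonesPottsModelSymmetric1994}, \cite[\S 3]{halversonPartitionAlgebras2005} and \cite[\S 5.1]{benkartPartitionAlgebrasInvariant2017}. So your approach differs in that you supply a complete, self-contained proof rather than deferring to references; the benefit is that a reader need not consult those sources, at the (minimal) cost of a paragraph of elementary reasoning.
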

The main theorem of \cite{maronINVARIANTEQUIVARIANTGRAPH2019} solves \cref{item:param} by exhibiting a basis described as follows: Let \( \cP \) be a fixed set partition of of \( \{1,\dots,l\}\), and define 
\begin{equation}
    e_{\cP} = \sum_{(i_1, \dots, i_l):\Pi(i_1, \dots, i_l) = \cP} e_{i_1} \otimes e_{i_2} \otimes \cdots \otimes e_{i_l} \in  \big((\RR^n)^{\otimes l}\big)^{\Sigma_n}.
\end{equation}
\begin{theorem}[{\cite[Thm. 1]{maronINVARIANTEQUIVARIANTGRAPH2019}}]
\label{thm:maron}
    The vectors \(\{e_{\cP} \, | \, \cP \text{ is a set partition of }  \{1,\dots,l\} \}\) form a basis of \(\big((\RR^n)^{\otimes l}\big)^{\Sigma_n}\).
\end{theorem}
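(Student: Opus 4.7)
The plan is to combine Lemma~\ref{lem:bell-nos} (which identifies $\Sigma_n$-orbits of index tuples with set partitions of $\{1,\dots,l\}$) with the standard fact that for a permutation representation on a set, the invariant subspace is spanned by sums over orbits. Concretely, I would argue in three short steps: invariance, linear independence, and spanning.

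For invariance, I would observe that by Lemma~\ref{lem:bell-nos}, the set $\{(i_1,\dots,i_l) : \Pi(i_1,\dots,i_l) = \cP\}$ that indexes the sum defining $e_\cP$ is precisely a single $\Sigma_n$-orbit in $\{1,\dots,n\}^l$ (or empty, if $\cP$ has more than $n$ parts). Because the diagonal $\Sigma_n$-action permutes the standard basis tensors via $\sigma \cdot (e_{i_1} \otimes \cdots \otimes e_{i_l}) = e_{\sigma(i_1)} \otimes \cdots \otimes e_{\sigma(i_l)}$, summing over a whole orbit produces an element of $\big((\RR^n)^{\otimes l}\big)^{\Sigma_n}$.

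For linear independence, I would note that each standard basis tensor $e_{i_1} \otimes \cdots \otimes e_{i_l}$ appears in exactly one $e_\cP$, namely the one for $\cP = \Pi(i_1,\dots,i_l)$, since $\Pi$ is a well-defined function of the tuple. Thus distinct $e_\cP$ have pairwise disjoint expansions in the standard basis of $(\RR^n)^{\otimes l}$, so the nonzero ones are linearly independent. For spanning, I would expand an arbitrary invariant vector as $v = \sum_{(i_1,\dots,i_l)} c_{i_1,\dots,i_l}\, e_{i_1} \otimes \cdots \otimes e_{i_l}$; the invariance condition $\sigma \cdot v = v$ forces $c_{i_1,\dots,i_l} = c_{\sigma(i_1),\dots,\sigma(i_l)}$, i.e.\ the coefficients are constant on $\Sigma_n$-orbits. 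Lemma~\ref{lem:bell-nos} then says this common value depends only on $\Pi(i_1,\dots,i_l)$, and regrouping yields $v = \sum_\cP c_\cP \, e_\cP$.

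The heavy lifting has already been done in Lemma~\ref{lem:bell-nos}; what remains is essentially bookkeeping, so there is no real obstacle. The one subtlety worth flagging is that $e_\cP = 0$ whenever $\cP$ has more than $n$ non-empty parts (the indexing sum is empty), so strictly speaking the nonzero elements in this family, and hence the actual basis, are indexed by set partitions of $\{1,\dots,l\}$ with at most $n$ parts. When $n \geq l$ this gives the Bell number $B(l)$, matching the dimension count cited earlier in the paper.
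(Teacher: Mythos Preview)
Your argument is correct and matches the paper's approach: the paper packages your invariance, spanning, and linear independence steps into \cref{lem:orbit-sums} (orbit sums give a basis for the invariants of a permutation representation) and then invokes \cref{lem:bell-nos} to identify orbits with set partitions, which is exactly what you do in unpacked form. Your remark that one must restrict to partitions with at most $n$ parts is also on point and consistent with how the paper counts dimensions.
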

In \cref{sec:invar-of-perm-reps} we provide intuition behind this construction.

\section{The partition algebra \(T(\RR^n)\) and its \(\Sigma_n\)-invariant subalgebra}
\label{sec:tensor-alg}
The invariant subspaces \(\big( (\RR^n)^{\otimes l}
\big)^{\Sigma_n}\) assemble to form an algebra. Multiplication in this algebra
can be interpreted as an operation that produces ``new permutation equivariant
layers from old'' by taking two permutation equivariant linear layers
\((\RR^n)^{\otimes m} \to (\RR^n)^{\otimes m'}\) and \( (\RR^n)^{\otimes p} \to
(\RR^n)^{\otimes p'}\)
and forming a third, \((\RR^n)^{\otimes (m+p)} \to (\RR^n)^{\otimes (m'+ p')}\),
by taking the tensor product. 


\begin{lemma}
\label{lem:algebra}
    For any \(p, p' \in \NN \) such that \(p + p' = l \), there is a natural bilinear map 
    \begin{equation}
        \big( (\RR^n)^{\otimes p} \big)^{\Sigma_n} \times \big( (\RR^n)^{\otimes p'} \big)^{\Sigma_n} \to \big( (\RR^n)^{\otimes l} \big)^{\Sigma_n} \text{  taking \( (v, w) \mapsto v \otimes w \).}
    \end{equation} 
\end{lemma}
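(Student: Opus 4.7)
The plan is to start from the standard bilinear tensor product map
\(\mu : (\RR^n)^{\otimes p} \times (\RR^n)^{\otimes p'} \to (\RR^n)^{\otimes (p + p')} = (\RR^n)^{\otimes l}\),
\((v,w) \mapsto v \otimes w\), whose bilinearity is a defining property of the tensor product. The content of the lemma is then to verify that \(\mu\) restricts to a map on the \(\Sigma_n\)-invariant subspaces, i.e. that \(v \otimes w\) lies in \(\big((\RR^n)^{\otimes l}\big)^{\Sigma_n}\) whenever \(v\) and \(w\) are individually invariant. Once restriction is established, bilinearity of the restricted map is inherited from bilinearity of \(\mu\).

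For the invariance check, the key observation is that the diagonal \(\Sigma_n\)-action on \((\RR^n)^{\otimes l}\) factors through the block decomposition \((\RR^n)^{\otimes l} = (\RR^n)^{\otimes p} \otimes (\RR^n)^{\otimes p'}\): unwinding definitions on pure tensors \(v_1 \otimes \cdots \otimes v_p\) and \(w_1 \otimes \cdots \otimes w_{p'}\), one has
\begin{equation}
    \sigma \cdot \bigl( (v_1 \otimes \cdots \otimes v_p) \otimes (w_1 \otimes \cdots \otimes w_{p'}) \bigr)
    = \bigl(\sigma \cdot (v_1 \otimes \cdots \otimes v_p)\bigr) \otimes \bigl( \sigma \cdot (w_1 \otimes \cdots \otimes w_{p'}) \bigr),
\end{equation}
since both sides equal \((\sigma \cdot v_1) \otimes \cdots \otimes (\sigma \cdot v_p) \otimes (\sigma \cdot w_1) \otimes \cdots \otimes (\sigma \cdot w_{p'})\). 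By linearity this identity extends from pure tensors to all \(v \in (\RR^n)^{\otimes p}, w \in (\RR^n)^{\otimes p'}\), giving \(\sigma \cdot (v \otimes w) = (\sigma \cdot v) \otimes (\sigma \cdot w)\). If \(v\) and \(w\) are \(\Sigma_n\)-invariant, the right hand side equals \(v \otimes w\) for every \(\sigma \in \Sigma_n\), so \(v \otimes w \in \big((\RR^n)^{\otimes l}\big)^{\Sigma_n}\) as required.

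There is no real obstacle here; the only conceptual step is the compatibility of the diagonal action with the block decomposition of the tensor power, and this is immediate from the definition of the diagonal action as factor-wise application of \(\sigma\). Naturality of the map, if one wishes to record it, follows because the construction is built entirely from the universal tensor product pairing and the inclusion of invariant subspaces, both of which are functorial.
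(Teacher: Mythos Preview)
Your proof is correct and takes essentially the same approach as the paper: both start from the bilinear tensor product map and verify that it restricts to the invariant subspaces via the compatibility \(\sigma \cdot (v \otimes w) = (\sigma \cdot v) \otimes (\sigma \cdot w)\). The paper phrases the final step slightly differently---as the inclusion \(\big( (\RR^n)^{\otimes p} \big)^{\Sigma_n} \times \big( (\RR^n)^{\otimes p'} \big)^{\Sigma_n} \subseteq \big( (\RR^n)^{\otimes p} \times (\RR^n)^{\otimes p'} \big)^{\Sigma_n}\), noting that invariance to independent permutations of the two factors is stronger than invariance to the diagonal action---but this is just a repackaging of your direct computation.
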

The direct sum \(\bigoplus_{l\in \NN} \big( (\RR^n)^{\otimes l}
\big)^{\Sigma_n} \) forms an \(\RR\)-algebra since the bilinear maps of \cref{lem:algebra} are associative and unital in a suitable sense.

\section{A new-old basis for permutation equivariant linear layers}
\label{sec:new-basis}
For each partition \(\cP\) of \(\{1, \dots, l\}\), we let \(e_{\cP} \in \big(
(\RR^n)^{\otimes l} \big)^{\Sigma_n}\) be the basis element associated by
\cref{lem:orbit-sums,lem:bell-nos}. We claim there is another way to generate
basis elements from partitions, with potential computational advantages: given
\(\cP = \{S_1, \dots, S_n\}\), there is a multilinear map
\begin{equation}
    \label{eq:make-basis-elem}
    \Phi_{\cP}: \prod_{i=1}^n \Bigl((\bigotimes_{j = 1}^{\nrm{S_i}} \RR^n)^{\Sigma_n}\Bigl) \xrightarrow{\mu}  \left( (\RR^n)^{\otimes l} \right)^{\Sigma_n} \xrightarrow{\tau}
    \left( (\RR^n)^{\otimes l} \right)^{\Sigma_n}
\end{equation}
where \(\mu \) is obtained from repeated use of the bilinear maps from \cref{lem:algebra} and \(\tau \) is a permutation of \(\{1,\dots, l \} \) (acting on tensor factors)
whose inverse  arranges the sets \(S_i\) in
successive contiguous blocks.
\footnote{Explicitly, if \(S_i = 
\{s_{i1},\dots, s_{i \nrm{S_i}}\}\) then in \cref{eq:make-basis-elem} we can use
the permutation sending \((1,\dots, l) \mapsto (s_{11},\dots, s_{1\nrm{S_1}},
\dots, s_{n1},\dots, s_{n\nrm{S_n}})\)} 
Within each \((\bigotimes_{j = 1}^{\nrm{S_i}} \RR^n)^{\Sigma_n}\) there is a
basis element \(e_{\{\{1,\dots, \nrm{S_i}\}\}}\) corresponding to the partition with one set, i.e.
a diagonal tensor. We define: 
\begin{equation}
    d_{\cP} := \Phi_{\cP}(e_{\{\{1,\dots, \nrm{S_1}\}\}} , e_{\{\{1,\dots, \nrm{S_2}\}\}} , \cdots , e_{\{\{1,\dots, \nrm{S_n}\}\}}).
\end{equation}
\begin{remark}
    The choice of permutation of (labels of) \(\{1,\dots, l\}\) used to define \(\Phi_{\cP}\) is irrelevant, as any two
    choices differ by a sequence of permutations of the individual \(S_1,
    \dots, S_n\), and \(e_{\{S_i\}}\) is invariant to permutations of \(S_i\). 
\end{remark}
\begin{theorem}
    \label{thm:alt-basis}
    The vectors \(\Bigl\{ d_{\cP} \, | \, \cP \text{ is a set partition of } \{1, \dots, l\} \Bigl\} \)
    are a basis for \(\big( (\RR^n)^{\otimes l} \big)^{\Sigma_n}\).
\end{theorem}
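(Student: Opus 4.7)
By \cref{thm:maron}, the orbit sums $e_\cP$ form a basis of $\big((\RR^n)^{\otimes l}\big)^{\Sigma_n}$, and the proposed set $\{d_\cP\}$ has the same cardinality (one element per set partition of $\{1,\ldots,l\}$). It therefore suffices to show that the change-of-basis matrix from $\{d_\cP\}$ to $\{e_\cP\}$ is invertible. The plan is to expand each $d_\cP$ in the orbit basis and to observe that the resulting matrix is unitriangular with respect to the refinement order on partitions.

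The main concrete step is to unpack \cref{eq:make-basis-elem}. Writing the blocks of $\cP$ as $S_1,\ldots,S_r$, each diagonal input $e_{\{\{1,\ldots,|S_i|\}\}}$ equals $\sum_{j=1}^n e_j^{\otimes |S_i|}$. Applying $\mu$ yields
\[
\sum_{(j_1,\ldots,j_r)\in\{1,\ldots,n\}^r} e_{j_1}^{\otimes |S_1|}\otimes\cdots\otimes e_{j_r}^{\otimes |S_r|},
\]
and the permutation $\tau$ reshuffles tensor factors so that index $j_i$ lands at every position in $S_i$. Consequently $d_\cP$ equals the sum of basis tensors $e_{i_1}\otimes\cdots\otimes e_{i_l}$ over all tuples $(i_1,\ldots,i_l)\in\{1,\ldots,n\}^l$ that are constant on each block of $\cP$.

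I would then group this sum by $\Sigma_n$-orbit. A tuple is constant on blocks of $\cP$ if and only if its induced partition $\Pi(i_1,\ldots,i_l)$ is a coarsening of $\cP$, so by \cref{lem:bell-nos} the contributing orbits are parameterized by set partitions $\cQ$ with $\cQ\geq\cP$ (where $\geq$ denotes ``coarser than or equal to''). A short multiplicity count — the $(j_1,\ldots,j_r)$'s whose resulting tuple lies in the orbit of type $\cQ$ correspond bijectively to assignments of distinct values in $\{1,\ldots,n\}$ to the blocks of $\cQ$, a count of $n!/(n-|\cQ|)!$ which equals the orbit size — shows that each $e_\cQ$ enters with coefficient exactly $1$. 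Thus $d_\cP = \sum_{\cQ\geq\cP} e_\cQ$, which one can sanity-check on $\cP=\{\{1,2\},\{3\}\}$ (predicting $d_\cP = e_\cP + e_{\{\{1,2,3\}\}}$). Ordering partitions by any linear extension of refinement placing finer partitions first, the transition matrix is upper unitriangular and hence invertible (its inverse is given explicitly by the M\"obius function of the partition lattice), so $\{d_\cP\}$ is a basis. The only real obstacle is the bookkeeping in the preceding paragraph and the multiplicity count just above; once these are in hand, the triangularity conclusion is immediate.
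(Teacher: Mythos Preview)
Your proof is correct and follows essentially the same route as the paper's: both establish the key identity $d_{\cP} = \sum_{\cQ \geq \cP} e_{\cQ}$ (the paper records this as a separate lemma) and then deduce that the $d_{\cP}$ form a basis via triangularity with respect to the refinement order. The paper packages the final step as an abstract linear-independence criterion using dual functionals $\lambda_{\cP} = \langle e_{I_{\cP}}, \cdot \rangle$ rather than your unitriangular change-of-basis matrix, but the two formulations are equivalent.
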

As mentioned in the introduction, the space of permutation equivariant linear layers \((\RR^n)^{\otimes m} \to (\RR^n)^{\otimes m} \) is a partition algebra. 
\begin{proposition}
\label{prop:diagram}
In the \(m = m' \) case, the basis for \(\Hom((\RR^n)^{\otimes m}, (\RR^n)^{\otimes m})^{\Sigma_n}\) constructed in \cref{thm:alt-basis} coincides with the diagram basis of the partition algebra denoted by ``\( L_{\sim} \)'' in \citep{jonesPottsModelSymmetric1994}.
\end{proposition}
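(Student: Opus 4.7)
My plan is to verify the identification directly in coordinates, since both $d_{\cP}$ and $L_{\sim}$ admit clean descriptions as $0/1$-tensors in $(\RR^n)^{\otimes 2m}$.

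First I would recall the explicit formula for the diagram basis from \cite[\S 1]{jonesPottsModelSymmetric1994}: for a set partition $\sim$ of $\{1,\dots,2m\}$, the tensor $L_{\sim}$ has $(i_1,\dots,i_{2m})$-entry equal to $1$ if the tuple is constant on every block of $\sim$ (that is, $i_a = i_b$ whenever $a \sim b$) and $0$ otherwise. Note the crucial distinction from the orbit basis of \cref{thm:maron}: $L_{\sim}$ does \emph{not} require $i_a \ne i_b$ for $a \not\sim b$, so it equals the sum of $e_{\cP'}$ over all partitions $\cP'$ refining $\sim$.

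Next I would unpack $d_{\cP}$ using \cref{eq:make-basis-elem}. By \cref{thm:maron}, each inner factor $e_{\{\{1,\dots,\nrm{S_i}\}\}}$ is the diagonal tensor $\sum_{j=1}^n e_j^{\otimes \nrm{S_i}}$. The iterated bilinear map $\mu$ produces the Kronecker product of these diagonal tensors, whose $l = 2m$ tensor factors are naturally labelled by the concatenation $S_1, S_2, \dots, S_n$. The permutation $\tau$ then reindexes these factors back to the standard order $1, 2, \dots, 2m$. Reading off coordinate entries shows that the $(i_1,\dots,i_{2m})$-entry of $d_{\cP}$ is $1$ precisely when the tuple is constant on every block of $\cP$, matching the formula for $L_{\cP}$.

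The only real obstacle is handling $\tau$ carefully, but this is mild: since within each block every tensor factor of the Kronecker product holds the same vector $e_{j_i}$, the action of $\tau$ only relabels summation indices and does not alter which tuples appear, which is essentially the same observation as in the Remark following \cref{eq:make-basis-elem}. With that in place, the identification $d_{\cP} = L_{\cP}$ follows by comparing entries.
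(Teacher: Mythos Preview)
Your approach is correct and essentially identical to the paper's: the paper dispatches this proposition by citing \cref{lem:obs2} (which is precisely your entry-wise characterisation of $d_{\cP}$) together with Jones's definition of $L_{\sim}$, so you have simply inlined the proof of \cref{lem:obs2} rather than invoking it.

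One small slip worth fixing: in your first paragraph you say $L_{\sim}$ equals the sum of $e_{\cP'}$ over partitions $\cP'$ \emph{refining} $\sim$, but a tuple constant on every block of $\sim$ has associated partition a \emph{coarsening} of $\sim$, so the sum is over $\cP'$ with $\sim \preceq \cP'$ in the paper's convention (i.e.\ $\sim$ refines $\cP'$). This parenthetical remark is not used in your main entry-comparison argument, so the proof itself is unaffected.
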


By construction, the tensors of \(d_{\cP} \) are factored Kronecker products. In
\cref{sec:algorithm}, we describe an algorithm for computing the \(\Sigma_n
\)-equivariant map \((\RR^n)^{\otimes m} \to (\RR^n)^{\otimes m'} \)
corresponding to multiplication with the tensor \(d_{\cP} \in (
(\RR^n)^{\otimes l} )^{\Sigma_n} \). While we leave an analysis of
computational cost (e.g. in terms of FLOPs) to future work, the algorithm shows
the factorization of \(d_{\cP} \) makes multiplying with them computationally
efficient (in particular more efficient than the the elements \(e_{\cP}\), see
\cref{sec:examples}). The description in \cref{sec:algorithm} also shows that
the \(d_{\cP} \) recover the ``sum/transfer/broadcast'' operations of \citep{panPermutationEquivariantLayers2022}.

\section{Conclusion and open questions}
\Cref{thm:alt-basis} provides a basis for permutation equivariant linear layers designed to be computationally efficient, since the tensors making up the basis are constructed as Kronecker products. One practical avenue for future work would be using the theorem to implement permutation equivariant linear layers for any user-specified \(m, m' \) (to the best of our knowledge, the implementations of our references hard-code paramterizations of \(\Hom((\RR^n)^{\otimes m}, (\RR^n)^{\otimes m'})^{\Sigma_n}  \) for specific values of \(m, m' \). Another direction would be to use a presentation of the partition algebra using a subset of the diagram basis as generators (see \cite[\S 3]{jonesPottsModelSymmetric1994}, \cite[Thm 1.11]{halversonPartitionAlgebras2005}) to obtain a relatively small but still expressive subspace of the permutation equivariant linear layers, as suggested in \cite[\S 6]{Kim2021TransformersGD}.



\nocite{*}
\bibliography{references}

\begin{thebibliography}{25}
\providecommand{\natexlab}[1]{#1}
\providecommand{\url}[1]{\texttt{#1}}
\expandafter\ifx\csname urlstyle\endcsname\relax
  \providecommand{\doi}[1]{doi: #1}\else
  \providecommand{\doi}{doi: \begingroup \urlstyle{rm}\Url}\fi

\bibitem[Benkart \& Halverson(2017)Benkart and
  Halverson]{benkartPartitionAlgebrasInvariant2017}
Georgia Benkart and Tom Halverson.
\newblock Partition {{Algebras}} and the {{Invariant Theory}} of the
  {{Symmetric Group}}, September 2017.

\bibitem[Bogatskiy et~al.(2022)Bogatskiy, Ganguly, Kipf, Kondor, Miller,
  Murnane, Offermann, Pettee, Shanahan, Shimmin, and
  Thais]{Bogatskiy2022SymmetryGE}
A~Bogatskiy, Sanmay Ganguly, Thomas Kipf, Risi Kondor, David~W. Miller, Daniel
  Murnane, Jan~T. Offermann, Mariel Pettee, Phiala~E. Shanahan, Chase~Owen
  Shimmin, and Savannah Thais.
\newblock Symmetry group equivariant architectures for physics.
\newblock \emph{ArXiv}, abs/2203.06153, 2022.

\bibitem[Bronstein et~al.(2021)Bronstein, Bruna, Cohen, and Veli{\v
  c}kovi{\'c}]{bronsteinGeometricDeepLearning2021a}
Michael~M. Bronstein, Joan Bruna, Taco Cohen, and Petar Veli{\v c}kovi{\'c}.
\newblock Geometric {{Deep Learning}}: {{Grids}}, {{Groups}}, {{Graphs}},
  {{Geodesics}}, and {{Gauges}}, May 2021.

\bibitem[Fukushima(1980)]{fukushimaNeocognitronSelforganizingNeural1980}
Kunihiko Fukushima.
\newblock Neocognitron: {{A}} self-organizing neural network model for a
  mechanism of pattern recognition unaffected by shift in position.
\newblock \emph{Biological Cybernetics}, 36\penalty0 (4):\penalty0 193--202,
  April 1980.
\newblock ISSN 1432-0770.
\newblock \doi{10.1007/BF00344251}.

\bibitem[Halverson \& Ram(2005)Halverson and
  Ram]{halversonPartitionAlgebras2005}
Tom Halverson and Arun Ram.
\newblock Partition algebras.
\newblock \emph{European Journal of Combinatorics}, 26\penalty0 (6):\penalty0
  869--921, August 2005.
\newblock ISSN 0195-6698.
\newblock \doi{10.1016/j.ejc.2004.06.005}.

\bibitem[Hartford et~al.(2018)Hartford, Graham, {Leyton-Brown}, and
  Ravanbakhsh]{hartfordDeepModelsInteractions2018}
Jason Hartford, Devon Graham, Kevin {Leyton-Brown}, and Siamak Ravanbakhsh.
\newblock Deep {{Models}} of {{Interactions Across Sets}}.
\newblock In \emph{Proceedings of the 35th {{International Conference}} on
  {{Machine Learning}}}, pp.\  1909--1918. {PMLR}, July 2018.

\bibitem[Jones(1994)]{jonesPottsModelSymmetric1994}
V.F.R. Jones.
\newblock The {{Potts}} model and the symmetric group.
\newblock \emph{Subfactors: Proceedings of the Taniguchi Symposium on Operator
  Algebras (Kyuzeso, 1993)}, 1994.

\bibitem[Kim et~al.(2021)Kim, Oh, and Hong]{Kim2021TransformersGD}
Jinwoo Kim, Saeyoon Oh, and Seunghoon Hong.
\newblock Transformers generalize deepsets and can be extended to graphs and
  hypergraphs.
\newblock In \emph{Neural Information Processing Systems}, 2021.

\bibitem[LeCun et~al.(1989)LeCun, Boser, Denker, Henderson, Howard, Hubbard,
  and Jackel]{cnns}
Y.~LeCun, B.~Boser, J.~S. Denker, D.~Henderson, R.~E. Howard, W.~Hubbard, and
  L.~D. Jackel.
\newblock Backpropagation applied to handwritten zip code recognition.
\newblock \emph{Neural Computation}, 1\penalty0 (4):\penalty0 541--551, 1989.
\newblock \doi{10.1162/neco.1989.1.4.541}.

\bibitem[Lim et~al.(2022)Lim, Robinson, Zhao, Smidt, Sra, Maron, and
  Jegelka]{limSignBasisInvariant2022}
Derek Lim, Joshua Robinson, Lingxiao Zhao, Tess Smidt, Suvrit Sra, Haggai
  Maron, and Stefanie Jegelka.
\newblock Sign and {{Basis Invariant Networks}} for {{Spectral Graph
  Representation Learning}}, September 2022.

\bibitem[Maron et~al.(2018)Maron, Ben-Hamu, Shamir, and
  Lipman]{maronINVARIANTEQUIVARIANTGRAPH2019}
Haggai Maron, Heli Ben-Hamu, Nadav Shamir, and Yaron Lipman.
\newblock Invariant and equivariant graph networks.
\newblock \emph{arXiv preprint arXiv:1812.09902}, 2018.

\bibitem[Martin(1991)]{martin1991potts}
P.P. Martin.
\newblock \emph{Potts Models and Related Problems in Statistical Mechanics}.
\newblock Series on Advances in Statistical Mechanics. {World Scientific},
  1991.
\newblock ISBN 978-981-02-0075-6.

\bibitem[Nikitin et~al.(2019)Nikitin, Tsilevich, and
  Vershik]{nikitinDecompositionTensorRepresentations2019}
P.~P. Nikitin, N.~V. Tsilevich, and A.~M. Vershik.
\newblock On the {{Decomposition}} of {{Tensor Representations}} of {{Symmetric
  Groups}}.
\newblock \emph{Algebras and Representation Theory}, 22\penalty0 (4):\penalty0
  895--908, August 2019.
\newblock ISSN 1572-9079.
\newblock \doi{10.1007/s10468-018-9804-6}.

\bibitem[Pan \& Kondor(2021)Pan and Kondor]{panFourierBasesSolving2021}
Horace Pan and Risi Kondor.
\newblock Fourier {{Bases}} for {{Solving Permutation Puzzles}}.
\newblock In \emph{Proceedings of {{The}} 24th {{International Conference}} on
  {{Artificial Intelligence}} and {{Statistics}}}, pp.\  172--180. {PMLR},
  March 2021.

\bibitem[Pan \& Kondor(2022)Pan and
  Kondor]{panPermutationEquivariantLayers2022}
Horace Pan and Risi Kondor.
\newblock Permutation {{Equivariant Layers}} for {{Higher Order Interactions}}.
\newblock In \emph{Proceedings of {{The}} 25th {{International Conference}} on
  {{Artificial Intelligence}} and {{Statistics}}}, pp.\  5987--6001. {PMLR},
  May 2022.

\bibitem[Paszke et~al.(2019)Paszke, Gross, Massa, Lerer, Bradbury, Chanan,
  Killeen, Lin, Gimelshein, Antiga, Desmaison, Kopf, Yang, DeVito, Raison,
  Tejani, Chilamkurthy, Steiner, Fang, Bai, and Chintala]{torch}
Adam Paszke, Sam Gross, Francisco Massa, Adam Lerer, James Bradbury, Gregory
  Chanan, Trevor Killeen, Zeming Lin, Natalia Gimelshein, Luca Antiga, Alban
  Desmaison, Andreas Kopf, Edward Yang, Zachary DeVito, Martin Raison, Alykhan
  Tejani, Sasank Chilamkurthy, Benoit Steiner, Lu~Fang, Junjie Bai, and Soumith
  Chintala.
\newblock Pytorch: An imperative style, high-performance deep learning library.
\newblock In H.~Wallach, H.~Larochelle, A.~Beygelzimer, F.~d\textquotesingle
  Alch\'{e}-Buc, E.~Fox, and R.~Garnett (eds.), \emph{Advances in Neural
  Information Processing Systems 32}, pp.\  8024--8035. Curran Associates,
  Inc., 2019.
\newblock URL
  \url{http://papers.neurips.cc/paper/9015-pytorch-an-imperative-style-high-performance-deep-learning-library.pdf}.

\bibitem[{Pearce-Crump}(2022)]{pearce-crumpBrauerGroupEquivariant2022}
Edward {Pearce-Crump}.
\newblock Brauer's {{Group Equivariant Neural Networks}}, December 2022.

\bibitem[{Pearce-Crump}(2023)]{pearce-crumpConnectingPermutationEquivariant2023}
Edward {Pearce-Crump}.
\newblock Connecting {{Permutation Equivariant Neural Networks}} and
  {{Partition Diagrams}}, January 2023.

\bibitem[Serre(1977)]{serreLinearRepresentationsFinite1977}
J.P. Serre.
\newblock \emph{Linear Representations of Finite Groups}.
\newblock Graduate Texts in Mathematics. {Springer New York}, 1977.
\newblock ISBN 978-1-4684-9458-7.

\bibitem[Sterling \& Irwin(2015)Sterling and Irwin]{zinc}
Teague Sterling and John~J. Irwin.
\newblock Zinc 15 – ligand discovery for everyone.
\newblock \emph{Journal of Chemical Information and Modeling}, 55\penalty0
  (11):\penalty0 2324--2337, 2015.
\newblock \doi{10.1021/acs.jcim.5b00559}.
\newblock URL \url{https://doi.org/10.1021/acs.jcim.5b00559}.
\newblock PMID: 26479676.

\bibitem[Thiede et~al.(2020)Thiede, Hy, and
  Kondor]{thiedeGeneralTheoryPermutation2020}
Erik~Henning Thiede, Truong~Son Hy, and Risi Kondor.
\newblock The general theory of permutation equivarant neural networks and
  higher order graph variational encoders, April 2020.

\bibitem[Vaswani et~al.(2017)Vaswani, Shazeer, Parmar, Uszkoreit, Jones, Gomez,
  Kaiser, and Polosukhin]{vaswaniAttentionAllYou2017a}
Ashish Vaswani, Noam Shazeer, Niki Parmar, Jakob Uszkoreit, Llion Jones,
  Aidan~N. Gomez, Lukasz Kaiser, and Illia Polosukhin.
\newblock Attention {{Is All You Need}}, December 2017.

\bibitem[Villar et~al.(2021)Villar, Hogg, Storey-Fisher, Yao, and
  Blum-Smith]{Villar2021ScalarsAU}
Soledad Villar, David~W. Hogg, Kate Storey-Fisher, Weichi Yao, and Ben
  Blum-Smith.
\newblock Scalars are universal: Equivariant machine learning, structured like
  classical physics.
\newblock In \emph{Neural Information Processing Systems}, 2021.

\bibitem[Wagstaff et~al.(2021)Wagstaff, Fuchs, Engelcke, Osborne, and
  Posner]{Wagstaff2021UniversalAO}
Edward Wagstaff, Fabian~B. Fuchs, Martin Engelcke, Michael~A. Osborne, and
  Ingmar Posner.
\newblock Universal approximation of functions on sets.
\newblock \emph{ArXiv}, abs/2107.01959, 2021.

\bibitem[Zaheer et~al.(2018)Zaheer, Kottur, Ravanbakhsh, Poczos, Salakhutdinov,
  and Smola]{zaheerDeepSets2018}
Manzil Zaheer, Satwik Kottur, Siamak Ravanbakhsh, Barnabas Poczos, Ruslan
  Salakhutdinov, and Alexander Smola.
\newblock Deep {{Sets}}, April 2018.

\end{thebibliography}
\bibliographystyle{iclr2023_workshop}

\appendix

\section{Invariants of permutation representations}
\label{sec:invar-of-perm-reps}

In this section, we give a concise proof of \cref{thm:maron} using a couple of elementary pieces of representation theory and a combinatorial lemma.
\begin{definition}
    Let \(G \) be a finite group and let \(X \) be a finite set with a
    \(G\)-action. The \textbf{associated permutation representation}, denoted
    \(\RR X \), is the vector space with one basis element \(e_x\) for each \(x
    \in X\) and the \(G \)-action defined by \(g \cdot e_x = e_{gx}\).
\end{definition}
The \(\Sigma_n\) representation \(\RR^n\) is a
permutation representation: \(\RR^n = \RR \{1, \dots, n\}\). In general, if \(X_1\) and \(X_2\) are finite sets with actions of a finite
group \(G\), then there is an isomorphism of \(G\)-representations \(\RR (X_1 \times X_2) = \RR X_1 \otimes \RR X_2\) (for essentially the same reason that the basic tensors \(e_i \otimes e_j\) for \(i = 1,\dots, m\), \(j=1,\dots n\) form a basis for \(\RR^m \otimes \RR^n\)). It follows
that \( (\RR^n)^{\otimes l} = \RR (\{1, \dots, n\}^l) \) is the permutation
representation associated to the set \(\{1, \dots, n\}^l\) with the ``diagonal''
\(\Sigma_n\) action. 
\begin{lemma}[{cf. \cite[\S 1]{jonesPottsModelSymmetric1994}, \cite[\S 5.2]{benkartPartitionAlgebrasInvariant2017}}]
    \label{lem:orbit-sums}
    Let \(G \) be a finite group and let \(X \) be a finite set with a
    \(G\)-action. Then there is a natural vector space isomorphism \(\RR (X/G) \xrightarrow{\simeq} (\RR X)^G\),
    where $X/G$ are the orbits of $X$, defined as follows: for each orbit \(Gx \in X/G\), send the
    basis vector \(e_{Gx}\) to \(\sum_{x'\in Gx} e_{x'}\).
\end{lemma}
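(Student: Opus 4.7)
The plan is to verify that the prescribed assignment defines a linear isomorphism by checking three short properties of the map: well-definedness on basis vectors of $\RR(X/G)$, landing inside the $G$-invariant subspace of $\RR X$, and bijectivity.

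First I would define $\phi: \RR(X/G) \to \RR X$ on basis vectors by $\phi(e_{Gx}) = \sum_{x' \in Gx} e_{x'}$ and extend linearly. Although the formula is written in terms of a chosen representative $x$ of the orbit, the sum ranges over every element of $Gx$, so replacing $x$ by any $gx$ produces the same vector. Hence $\phi$ is a well-defined linear map.

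Second I would verify that $\phi$ lands in $(\RR X)^G$. For any $h \in G$, the action gives $h \cdot \sum_{x' \in Gx} e_{x'} = \sum_{x' \in Gx} e_{hx'}$; since multiplication by $h$ is a bijection of the orbit $Gx$ with itself, reindexing the sum shows the result equals $\sum_{x' \in Gx} e_{x'}$. Thus every $\phi(e_{Gx})$ is $G$-invariant, and by linearity so is the image of $\phi$.

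Third I would show $\phi$ is a linear isomorphism onto $(\RR X)^G$. For injectivity, the vectors $\phi(e_{Gx})$ for distinct orbits $Gx$ are supported on pairwise disjoint subsets of $X$ (namely the orbits themselves), hence linearly independent in $\RR X$. For surjectivity, let $v = \sum_{x \in X} c_x e_x \in (\RR X)^G$. The invariance condition $g \cdot v = v$ reads $\sum_{x \in X} c_x e_{gx} = \sum_{x \in X} c_x e_x$ for every $g \in G$, which after relabeling forces $c_{gx} = c_x$ for all $g, x$. Therefore the coefficient function $x \mapsto c_x$ is constant on each $G$-orbit; writing $c_{Gx}$ for its common value on $Gx$, we obtain $v = \sum_{Gx \in X/G} c_{Gx} \phi(e_{Gx})$, so $v$ lies in the image of $\phi$.

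No step presents a serious obstacle, since the lemma is essentially a repackaging of the fact that invariants of a permutation representation have a basis indexed by orbit sums. The subtlest point is the surjectivity argument, where one has to unpack the invariance equation in $\RR X$ into the statement that coefficients are constant on orbits; once this is observed, the rest of the proof is immediate.
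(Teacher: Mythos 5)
Your proposal is correct and follows essentially the same route as the paper: the paper also establishes surjectivity by unpacking the invariance condition into the statement that coefficients are constant on orbits, and injectivity from the fact that the orbit sums have disjoint supports (stated there as pairwise orthogonality). The extra checks you include (well-definedness and that the image lies in \((\RR X)^G\)) are harmless additions the paper leaves implicit.
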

In particular, to compute \(( (\RR^n)^{\otimes l}
)^{\Sigma_n}\), and thus by \cref{cor:hom-tensor} the spaces of permutation
equivariant linear layers \((\RR^n)^{\otimes m} \to (\RR^n)^{\otimes m'}\) for all \(m+
m' = l\), all we need to do is compute the orbits of the diagonal \(\Sigma_n\)
action on \(\{1, \dots, n\}^l\) --- and that is precisely the content of \cref{lem:bell-nos}.

\section{Proofs}

\begin{proof}[Proof of \cref{lem:hom-tensor}]
    In general, for any two real vector spaces \(V, W \) there is an isomorphism \( V^\vee \otimes W \simeq \Hom(V, W) \) sending a basic tensor \( \lambda \otimes w \) to the linear map \(\varphi: V \to W \) defined by \(\varphi(v) = \lambda(v)\cdot w \). In our case, this shows that 
    \begin{equation}
        \Hom((\RR^n)^{\otimes m}, (\RR^n)^{\otimes m'}) \simeq \big( (\RR^n)^{\otimes m} \big)^\vee \otimes  (\RR^n)^{\otimes m'}.
    \end{equation}
    As \((\RR^n)^{\otimes m} \) is a real representation of the finite group \(\Sigma_n \) it admits an equivariant Euclidean inner product: the natural, explicit one to use is simply defined on standard basis tensors as
    \begin{equation}
        \langle e_{i_1} \otimes \cdots \otimes e_{i_m},  e_{j_1} \otimes \cdots \otimes e_{j_m}\rangle = \prod_{k=1}^m \delta_{i_k j_k}.
    \end{equation}
    Such an inner product is equivalent to an isomorphism \(  (\RR^n)^{\otimes m} \simeq \big( (\RR^n)^{\otimes m} \big)^\vee \) by the map \( v \mapsto \langle v, - \rangle \). Hence  
    \begin{equation}
        \Hom((\RR^n)^{\otimes m}, (\RR^n)^{\otimes m'}) \simeq  (\RR^n)^{\otimes m}  \otimes  (\RR^n)^{\otimes m'} = (\RR^n)^{\otimes (m+m')}
    \end{equation}
    and taking invariants on both sides completes the proof.
\end{proof}

\begin{proof}[Proof of \cref{cor:hom-tensor}]
    \Cref{lem:hom-tensor} gives us an invertible map from 
    \(\big((\RR^n)^{\otimes (m+m')}\big)^{\Sigma_n}\) to \(\Hom((\RR^n)^{\otimes m}, (\RR^n)^{\otimes m'})^{\Sigma_n}\). Then the parameterization basis of \((\RR^n)^{\otimes l}\) is mapped to a parametrization basis of \(\Hom((\RR^n)^{\otimes m}, (\RR^n)^{\otimes m'})^{\Sigma_n}\).
\end{proof}

\begin{proof}[Proof of \cref{lem:orbit-sums}]
     We note that the lemma is essentially \cite[Ex. 2.6 (a)]{serreLinearRepresentationsFinite1977}. A \(\sum_{x \in X} c_x x \in \RR X \) lies in the invariant subspace \( (\RR X)^G \) if and only if for each \(g \in G \)
     \begin{equation}
         \sum_{x \in X} c_x x = g \cdot \sum_{x \in X} c_x x = \sum_{x \in X} c_x gx = \sum_{x \in X} c_{g^{-1} x} x
     \end{equation} 
     where in the last step we have reindexed the sum. The condition that \(c_x = c_{g^{-1} x} \) for all \(g \in G, x\in X \) says precisely that the coefficients \(c_x \) are constant on \(G \)-orbits. In other words, if \(c_{Gx} \) is the common value of the \(c_{x'}\) as \(x' \) runs over the orbit \(Gx \) then 
     \begin{equation}
         \sum_{x \in X} c_x x = \sum_{Gx \in X/G} c_{Gx} \sum_{x' \in Gx} e_{x'}
     \end{equation}
     showing that the map defined in \cref{lem:orbit-sums} is surjective. It is clearly injective, since for example the orbit sums \(\sum_{x' \in Gx} e_{x'} \) are pairwise orthogonal.
\end{proof}

\begin{proof}[Proof of \cref{lem:bell-nos}]
Since the proof of \cref{lem:bell-nos} for any \(l \) is identical to its proof in the \(l\)-even case considered by prior work on partition algebras, we omit a proof and instead provide references where the \(l\)-even case is discussed: see \cite[\S 1]{jonesPottsModelSymmetric1994}, \cite[\S 3]{halversonPartitionAlgebras2005} and \cite[\S 5.1]{benkartPartitionAlgebrasInvariant2017}.
\end{proof}

\begin{proof}[Proof of \cref{lem:algebra}]
     We note that in the essence of this proof is the same as the proof that a product of symmetric polynomials is symmetric. 
     
     Tensor (i.e. Kronecker) product defines a bilinear map 
     \begin{equation}
          (\RR^n)^{\otimes p}  \times  (\RR^n)^{\otimes p'} \to  (\RR^n)^{\otimes l} 
     \end{equation}
     sending \( (v, w) \) to \(v \otimes w \). Taking \(\Sigma_n\)-invariants gives a map
     \begin{equation}
         \big( (\RR^n)^{\otimes p}  \times  (\RR^n)^{\otimes p'}  \big)^{\Sigma_n} \to \big( (\RR^n)^{\otimes l} \big)^{\Sigma_n}.
     \end{equation}
     Finally there is an inclusion \(\big( (\RR^n)^{\otimes p} \big)^{\Sigma_n} \times \big( (\RR^n)^{\otimes p'} \big)^{\Sigma_n} \subseteq \big( (\RR^n)^{\otimes p}  \times  (\RR^n)^{\otimes p'}  \big)^{\Sigma_n}  \) since the left hand side consists of pairs of tensors \((v, w) \) invariant to the action of independent permutations \(\sigma, \tau \in \Sigma_n \) as \((\sigma \cdot v, \tau \cdot w) \), and this condition is stronger than invariance to the diagonal action of a single permutation \(\sigma \) as  \((\sigma \cdot v, \sigma \cdot w) \).
\end{proof}

\begin{lemma}
    \label{lem:poset-thing}
    Let \(V\) be a real vector space and let 
    \begin{equation}
        \{v_\alpha \in V \, | \, \alpha \in \cA\}
    \end{equation}
    be a set of vectors in \(V\) indexed by a partially ordered set \(\cA\).
    Suppose that for every \(\alpha \in A\) there exists a linear functional
    \(\lambda_\alpha: V \to \RR\) with the property that 
    \begin{equation}
        \lambda_\alpha(v_\alpha) \neq 0 \text{  and  } \lambda_\alpha(v_\beta) = 0 \text{  unless  } \beta \preceq \alpha,
    \end{equation}
    where \(\preceq\) denotes the partial order on \(\cA\). Then, \(\{v_\alpha
    \in V \, | \, \alpha \in \cA\}\) is linearly independent.
\end{lemma}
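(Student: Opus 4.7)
The plan is a standard ``triangularity'' argument by induction on the size of the support of a putative dependence relation. Suppose we have a finite subset $F \subseteq \cA$ and scalars $c_\alpha \in \RR$ for $\alpha \in F$ satisfying
\begin{equation}
\sum_{\alpha \in F} c_\alpha v_\alpha = 0,
\end{equation}
and we wish to show every $c_\alpha$ vanishes. The base case $F = \emptyset$ is vacuous, so we may assume $F \neq \emptyset$.

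The key observation is that by hypothesis, $\lambda_\alpha(v_\beta)$ vanishes unless $\beta \preceq \alpha$, so to isolate a single term of the sum I would pick a \emph{minimal} element $\alpha_0 \in F$ (which exists since $F$ is finite and partially ordered by the restriction of $\preceq$). Applying $\lambda_{\alpha_0}$ to the relation then annihilates $v_\alpha$ for every $\alpha \in F$ with $\alpha \not\preceq \alpha_0$; by minimality of $\alpha_0$ within $F$, this is exactly every $\alpha \in F \setminus \{\alpha_0\}$. What remains is the single equation $c_{\alpha_0} \lambda_{\alpha_0}(v_{\alpha_0}) = 0$, and since $\lambda_{\alpha_0}(v_{\alpha_0}) \neq 0$ by hypothesis, we conclude $c_{\alpha_0} = 0$.

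After deducing $c_{\alpha_0} = 0$, the dependence relation reduces to one supported on the strictly smaller set $F \setminus \{\alpha_0\}$, so induction on $|F|$ finishes the proof. There is no real obstacle here; the only subtlety is the direction of the ordering. Because $\lambda_\alpha$ is permitted to be nonzero only on $v_\beta$ with $\beta$ \emph{below} $\alpha$, one must peel off \emph{minimal} rather than maximal elements of $F$ in order to prune the relation. (If the hypothesis were instead $\lambda_\alpha(v_\beta) = 0$ unless $\alpha \preceq \beta$, the same argument would run with maximal elements.)
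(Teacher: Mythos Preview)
Your proof is correct and follows essentially the same approach as the paper: both arguments choose a minimal element of the support of a putative dependence relation, apply the corresponding functional to isolate that term, and conclude its coefficient vanishes. The only cosmetic difference is that the paper phrases this as a single contradiction (assuming the support is nonempty and deriving $c_{\alpha^*}=0$), whereas you package it as an induction on $|F|$.
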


\begin{proof}
    Consider an equation of the form
    \begin{equation}
        \label{eq:lin-dep}
        \sum_{\alpha \in \cA} c_\alpha v_\alpha = 0
    \end{equation}
    where all but finitely many of the \(c_\alpha\) are \(0\). Suppose towards
    contradiction that some \(c_\alpha\neq 0\), and let 
    \begin{equation}
        \cB = \{\alpha \in \cA \, | \, c_\alpha \neq 0 \} \subseteq \cA.
    \end{equation}
    By hypothesis, \(\cB\) is a nonempty finite partially ordered set, and as
    such it has at least one minimal element, i.e. an \(\alpha^* \in \cB\) such
    that if \(\alpha \in \cB\) and \(\alpha \preceq \alpha^*\) then \(\alpha =
    \alpha^*\). It follows that \(\lambda_{\alpha^*} (v_\alpha) = 0\) for \(\alpha \in
    \cB \setminus \{\alpha^*\}\), hence applying \(\lambda_{\alpha^*}\) to
    \cref{eq:lin-dep} results in 
    \begin{equation}
        c_{\alpha^*} \lambda_{\alpha^*}(v_{\alpha^*}) = 0
    \end{equation}
    and thus \(c_{\alpha^*} = 0\) (since \(\lambda_{\alpha^*}(v_{\alpha^*}) \neq 0\)), a contradiction.
\end{proof}

\begin{lemma}
\label{lem:obs2}
    For any set partition \(\cP \) of \(\{1, \dots, l\}\) the vector \(d_{\cP}\) is the sum of the \(e_{J} = e_{j_1} \otimes \cdots
    \otimes e_{j_l} \) over all indices \( J = (j_1,\dots, j_l)\) that are constant on every set occurring in the partition \(\cP \) --- equivalently, \(d_{\cP} = \sum_{\cP \preceq \cQ} e_{\cQ}\) where \( \cQ \preceq \cP \) if and only if \(\cQ \) refines the partition \(\cP \).
\end{lemma}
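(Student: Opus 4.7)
The plan is to unpack the definition of $d_{\cP}$ in coordinates and compare with the expansion of $e_{\cQ}$ for $\cQ$ coarser than $\cP$. First, for each block $S_i$ of $\cP$, the basis element $e_{\{\{1,\dots,\nrm{S_i}\}\}} \in ((\RR^n)^{\otimes \nrm{S_i}})^{\Sigma_n}$ is the diagonal tensor
\begin{equation}
    e_{\{\{1,\dots,\nrm{S_i}\}\}} = \sum_{k=1}^n e_k^{\otimes \nrm{S_i}}.
\end{equation}
Applying the multilinear map $\mu$ from \cref{lem:algebra} repeatedly gives
\begin{equation}
    \mu\bigl(e_{\{\{1,\dots,\nrm{S_1}\}\}}, \dots, e_{\{\{1,\dots,\nrm{S_n}\}\}}\bigr) = \sum_{(k_1,\dots,k_n) \in \{1,\dots,n\}^n} e_{k_1}^{\otimes \nrm{S_1}} \otimes \cdots \otimes e_{k_n}^{\otimes \nrm{S_n}}.
\end{equation}

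Next I would apply the permutation $\tau$ of tensor factors which, by construction, moves the $\nrm{S_i}$ contiguous factors labeled by $k_i$ to the positions specified by $S_i \subseteq \{1,\dots,l\}$. On the level of basis tensors, this produces $e_{J(k_1,\dots,k_n)} := e_{j_1} \otimes \cdots \otimes e_{j_l}$, where $j_s = k_i$ whenever $s \in S_i$. Hence
\begin{equation}
    d_{\cP} = \sum_{(k_1,\dots,k_n) \in \{1,\dots,n\}^n} e_{J(k_1,\dots,k_n)}.
\end{equation}
The map $(k_1,\dots,k_n) \mapsto J(k_1,\dots,k_n)$ is a bijection between $\{1,\dots,n\}^n$ and the set of tuples $J = (j_1,\dots,j_l) \in \{1,\dots,n\}^l$ that are constant on each block $S_i$ (the inverse recovers $k_i$ as the common value of $J$ on $S_i$). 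This proves the first description of $d_{\cP}$.

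For the equivalent reformulation, I would note that a tuple $J$ is constant on each block of $\cP$ if and only if every $S_i$ is contained in a single block of the partition $\Pi(J)$ of \cref{lem:bell-nos}, which is exactly the statement that $\cP$ refines $\Pi(J)$, i.e., $\cP \preceq \Pi(J)$. Partitioning the indexing set of tuples by their associated partition and using $e_{\cQ} = \sum_{J : \Pi(J) = \cQ} e_J$ then yields
\begin{equation}
    d_{\cP} = \sum_{J : \cP \preceq \Pi(J)} e_J = \sum_{\cQ : \cP \preceq \cQ} \, \sum_{J : \Pi(J) = \cQ} e_J = \sum_{\cP \preceq \cQ} e_{\cQ}.
\end{equation}

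There is no real obstacle here: the argument is pure bookkeeping. The one place to be careful is keeping track of the permutation $\tau$ — explicitly checking that reindexing the positions via $\tau$ sends the contiguous-block tensor $e_{k_1}^{\otimes \nrm{S_1}} \otimes \cdots \otimes e_{k_n}^{\otimes \nrm{S_n}}$ to $e_{J(k_1,\dots,k_n)}$ as defined above. The remark following \eqref{eq:make-basis-elem} guarantees this is independent of the particular choice of $\tau$, so any convenient choice suffices.
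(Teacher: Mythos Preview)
Your proof is correct and follows essentially the same approach as the paper: both expand the Kronecker product of diagonal tensors and identify the resulting basis tensors as exactly those indexed by tuples constant on each block of $\cP$. The paper phrases this as ``after permuting the $l$ tensor factors, reduce to the case where the blocks are contiguous,'' which is the same manoeuvre as your explicit tracking of $\tau$; your write-up additionally spells out the second equivalence $d_{\cP} = \sum_{\cP \preceq \cQ} e_{\cQ}$, which the paper leaves implicit.
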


\begin{proof}
    After permuting the \(l\) tensor factors, we may reduce to the case where \(\cP \) has the form 
   \begin{equation}
       \cP = \coprod_{i=1}^n \{ \sum_{j<i} p_j + 1, \dots, \sum_{j<i} p_j + p_i  \}
   \end{equation}
   where \(p_1,\dots, p_n \in \NN \) and \(\sum_{i=1}^n p_i = l \). Then by direct calculation
   \begin{equation}
       d_{\cP} = \sum_{j_1, \dots, j_n = 1}^n e_{\underbrace{j_1, \dots, j_1}_{p_1 \text{times}}} \otimes e_{\underbrace{j_2, \dots, j_2}_{p_2 \text{times}}} \otimes \cdots \otimes e_{\underbrace{j_n, \dots, j_n}_{p_n \text{times}}},
   \end{equation}
   and evidently the indices \( (\underbrace{j_1, \dots, j_1}_{p_1 \text{times}}, \dots,  \underbrace{j_n, \dots, j_n}_{p_n \text{times}})\) occurring in the sum are exactly those constant on each of the sets \(\{ \sum_{j<i} p_j + 1, \dots, \sum_{j<i} p_j + p_i  \}\).
\end{proof}

\begin{proof}[Proof of \cref{thm:alt-basis}]
    Since both the \(e_{\cP}\) and the \(d_{\cP}\) are indexed by set
    partitions of \(\{1, \dots, l\}\) into at most \(n \) non-empty subsets they have the same cardinality, and we
    already know the \(e_{\cP}\) are a basis for \(\big( (\RR^n)^{\otimes l}
    \big)^{\Sigma_n}\). Thus, it will suffice to show the  \(d_{\cP}\) are linearly independent. 
    We will prove this by exhibiting a set of linear functionals
    \begin{equation}
        \{\lambda_{\cP}: \big( (\RR^n)^{\otimes l} \big)^{\Sigma_n} \to \RR \, | \, \cP \text{ is a set partition of } \{1, \dots, l\}\}
    \end{equation}
    such that 
    \begin{equation}
        \label{eq:lambda-prop}
        \lambda_{\cP} (d_{\cP}) \neq 0 \text{  and  } \lambda_{\cP}(d_{\cQ}) = 0 \text{  unless } \cQ \preceq \cP,
    \end{equation}
    where \( \cQ \preceq \cP \) if and only if \(\cQ \) refines the partition \(\cP \),
    satisfying the properties of \cref{lem:poset-thing} below. Explicitly, for
    each partition \(\cP\) choose an index \(I_\cP = (i_1, \dots, i_l) \in \{1,
    \dots, n\}^l\) such that \(\cP\) is the partition associated to \(I_{\cP}\)
    as in \cref{lem:bell-nos}, and let \(e_{I_{\cP}} = e_{i_1} \otimes \cdots
    \otimes e_{i_l} \in (\RR^n)^{\otimes l}\). Equivalently, \(e_{I_{\cP}}\) is
    one of the standard basis vectors for \((\RR^n)^{\otimes l}\) occurring in
    the orbit sum defining \(e_{\cP}\). Then define \(\lambda_{\cP}\) as dot
    product with \(e_{I_{\cP}}\):
    \begin{equation}
        \label{eq:our-lambda}
        \lambda_{\cP}(v) = \langle e_{I_{\cP}}, v \rangle \text{  for  } v \in \big( (\RR^n)^{\otimes l} \big)^{\Sigma_n}.
    \end{equation}
    
    First, since by definition \( I_{\cP} \) is constant on each set of the partition \(\cP \), our characterization of \(d_{\cP}\) in \cref{lem:obs2} gives 
    \( \langle e_{I_{\cP}}, d_{\cP} \rangle = 1 \). Next, if \(\cQ \not \preceq \cP \), then since \(\cQ \) doesn't refine \(\cP \) writing 
    \begin{equation}
        \cP = \{S_1,\dots, S_n\} \text{  and  } \cQ = \{T_1,\dots, T_n\}
    \end{equation}
    there must be a non-empty \(T_i \) such that \( T_i \not \subseteq S_j \) for all \(j \). Since \(T_i \subseteq \bigcup_j S_j \), there must be distinct \(S_j, S_k \in \cP \) with \(S_j \cap T_i \neq \emptyset, S_k \cap T_i \neq \emptyset \). By design the index \(I_{\cP} \) takes distinct values on \(S_j \) and \(S_k \), but for every  \(e_{J} = e_{j_1} \otimes \cdots
    \otimes e_{j_l} \) occurring in \(d_{\cQ}\) with non-zero coefficient the index \(J\) is constant on \(T_i \). Thus  \(\langle e_{I_{\cP}}, d_{\cQ} \rangle = 0\).
\end{proof}

\begin{proof}[Proof of \cref{prop:diagram}]
     This follows from \cref{lem:obs2} and \cite[p.
     263]{jonesPottsModelSymmetric1994} (see also \cite[\S 4.2-3]{benkartPartitionAlgebrasInvariant2017}).
\end{proof}

\section{An algorithm for multiplication with the tensors \(d_{\cP}\)}
\label{sec:algorithm}

In this section we describe how to apply one of the tensors \(d_{\cP} \in  \big( (\RR^n)^{\otimes (m+m')} \big)^{\Sigma_n} \simeq \Hom((\RR^n)^{\otimes m}, (\RR^n)^{\otimes m'})^{\Sigma_n}\) appearing in \cref{thm:alt-basis} to a tensor \(v \in  (\RR^n)^{\otimes m}\). We will work in the usual bases for these tensor products, and use notation of the form
\begin{equation}
    v = \sum_{i_1, \dots, i_m=1}^n v_{i_1 \dots i_m} e_{i_1} \otimes \cdots \otimes e_{i_m}.
\end{equation}
 Given another tensor  
\begin{equation}
    w = \sum_{j_1, \dots, j_l=1}^n w_{j_1 \dots j_l} e_{j_1} \otimes \cdots \otimes e_{j_l} \in (\RR^n)^{\otimes l}
\end{equation}
and ordered tuples of indices $T$ and $S$ with
\(\set(T) \subseteq \{1,\dots, m \}, \set(S) \subseteq \{1,\dots, l\} \) 
and tuple length \(d \leq \min \{m, l \}\),
we will use the notation \(\tensordot(v, w, (S, T))  \) to denote the tensor
contraction operation implemented in PyTorch as \( \texttt{tensordot}(v, w,
\texttt{dims}=(S, T)) \) \citep{torch}. The special case where \(T = (m, m-1,
\dots, m-d+1), S = (1,\dots, d ) \) will be abbreviated as \(  \tensordot(v, w,
d) \). We refer to the documentation at
\href{https://pytorch.org/docs/stable/index.html}{https://pytorch.org/docs/stable/index.html}
for further details.

Unravelling \cref{lem:hom-tensor}, we see that our goal is to calculate \(\tensordot(v, d_{\cP}, m)\). The point we wish to make is that for many partitions \(\cP \) this can be accomplished with multiple contractions over fewer than \(m \) indices, and moreover the contractions occurring can be replaced with sums and indexing operations.

The contraction \(\tensordot(v, d_{\cP}, m) \) is invariant to permutations of the tensor factors of \(v\) and the first \(m \) tensor factors of \( d_{\cP} \). Moreover, we are free to permute the last \(m' \) factors of \(d_{\cP} \) provided we apply the inverse permutation to the \(m'\) factors of \(\tensordot(v, d_{\cP}, m) \). These two observations allow us to reduce to the case where \(\cP \) has the form 
\begin{equation}
(\coprod_{i=1}^{a} S_i) \coprod (\coprod_{i=1}^b T_i) \coprod (\coprod_{i=1}^c B_i ) 
\end{equation}
where
\begin{itemize}
    \item \(a + b + c =n \),
    \item the \(S_i \) are consecutive and contiguous sets of indices in \(\{1,\dots, m \} \) beginning at \(1 \) and ending at say \(p \), 
    \item  the \(B_i \) are consecutive and contiguous sets of indices in \(\{m+1,\dots, m + m' \} \) beginning at say \(p+ p'\) and ending at \(m + m' \), and 
    \item the \(T_i \) partition \(\{p, p+1,\dots, p+ p'\} \), and moreover they decompose as \(T_i = T'_i + T''_i \) where \(T'_1,\dots, T'_b\) are consecutive and contiguous sets of indices in \(\{p+1,\dots, m \} \) and \(T''_b,\dots, T''_1\) are consecutive and contiguous sets of indices in \(\{p+p'+1,\dots, m+m' \} \). 
\end{itemize}
 It follows that
\begin{equation}
    \label{eq:happy-factors}
     \begin{split}
        d_{\cP} &= e_{\{\{1,\dots, \nrm{S_1}\}\}} \otimes \cdots \otimes e_{\{\{1,\dots, \nrm{S_a}\}\}} \\
        &\otimes \Psi(e_{\{\{1,\dots, \nrm{T_1}\}\}} \otimes \cdots \otimes e_{\{\{1,\dots, \nrm{T_b}\}\}}) \\
        &\otimes e_{\{\{1,\dots, \nrm{B_1}\}\}} \otimes \cdots \otimes e_{\{\{1,\dots, \nrm{B_c}\}\}}
     \end{split}
\end{equation}
where \(\Psi \) permutes tensor factors according to a certain permutation of \(\{p, p+1,\dots, p+ p'\} \) (the one separating each \(T_i \) into the subsets \(T'_i, T''_i\)). 

We now make repeated use of two simple calculations: first, 
\begin{equation}
\label{eq:extract-factor}
     \tensordot(v, w, d) =  \tensordot(v, w', d) \otimes w'' \text{whenever \(w = w' \otimes w'' \) where \(w' \) has \( \geq d \) indices}.
\end{equation}
That is, \(w'' \) can be extracted from the \(\tensordot \). On the other hand,
\begin{equation}
    \label{eq:1atatime}
    \tensordot(v, w, d) =  \tensordot(\tensordot(v, w', d'), w'', d-d') \text{whenever \(w = w' \otimes w'' \) where \(w' \) has \( d ' \leq d \) indices}
\end{equation}
In particular, \cref{eq:extract-factor} applies to the factors \(e_{B_i} \) in \cref{eq:happy-factors}, giving
\begin{equation}
    \begin{split}
        &\tensordot(v, d_{\cP}, m) \\
        &= \tensordot(v,  e_{\{\{1,\dots, \nrm{S_1}\}\}} \otimes \cdots \otimes e_{\{\{1,\dots, \nrm{S_a}\}\}} \otimes \Psi(e_{\{\{1,\dots, \nrm{T_1}\}\}} \otimes \cdots \otimes e_{\{\{1,\dots, \nrm{T_b}\}\}})  , m) \\
        &\otimes e_{\{\{1,\dots, \nrm{B_1}\}\}} \otimes \cdots \otimes e_{\{\{1,\dots, \nrm{B_c}\}\}}.
    \end{split}
\end{equation}
An explicit calculation shows that the tensor operation \(x \mapsto x \otimes e_{\{1,\dots, p\}}\) simply creates a tensor with \(p \) more indices than \(x \) and places copies of \(x \) along indices of the form \( \dots, \underbrace{i, i,\dots, i}_{p \text{ times}}\), with zeros elsewhere. Thus once \[\tensordot(v,  e_{\{\{1,\dots, \nrm{S_1}\}\}} \otimes \cdots \otimes e_{\{\{1,\dots, \nrm{S_a}\}\}} \otimes \Psi(e_{\{\{1,\dots, \nrm{T_1}\}\}} \otimes \cdots \otimes e_{\{\{1,\dots, \nrm{T_b}\}\}})  , m) \] is computed, multiplication with the \(e_{\{\{1,\dots, \nrm{B_i}\}\}} \) can be accomplished efficiently with indexing operations.\footnote{Observe that in contrast Kronecker multiplication with a general tensor \(w'' \) with \(p \) entries would require \(n^p \) scalar multiplications per entry of \(x \), so e.g. if \(x \) has \(q \) indices \(n^{p+q} \) scalar multiplications in total.}

\Cref{eq:1atatime} shows that contraction with the \(e_{\{\{1,\dots, \nrm{S_i}\}\}} \) can be carried out ``one \(i \) at a time,'' from left to right. Based on our conventions for \(\tensordot \), a contraction \(\tensordot(v, e_{\{\{1,\dots, p\}\}}, p)  \) yields a tensor with \(m - p \) indices, with \((i_1, \dots, i_{m-p})\)-th entry 
\begin{equation}
    \sum_{j=1}^n v_{i_1 \dots i_{m-p} \underbrace{j\dots j}_{p \text{ times}}}
\end{equation}
Hence these contractions can be implemented with index operations and summation.\footnote{Observe that \cref{eq:1atatime} reduces the cost of tensor contraction from roughly \(n^d \) to \(n^{d'} + n^{d-d'}\) multiplications and additions. In the special case where \(w' = e_{\{1,\dots, d'\}}\) the term \(n^{d'} \) effectively drops to \(n \).}

Finally, the tensor \(  \Psi(e_{T_1} \otimes \cdots \otimes e_{T_b})  \) must be dealt with. We claim that \(\tensordot \) with this tensor is a \emph{transfer operation} as described in \cite[\S 5]{panPermutationEquivariantLayers2022}. Indeed, they define transfer operations as those corresponding to sets in the partition \(\cP \) having non-empty intersection with both \(\{1,\dots, m\} \) and \(\{m+1,\dots, m+m' \}\), which is exactly the role played by the \(T_i \). Since we have already addressed how to multiply with the \(e_{\{\{1,\dots, \nrm{S_i}\}\}}\) and \(e_{\{\{1,\dots, \nrm{B_i}\}\}} \) we may as well assume for simplicity that \(d_{\cP} =  \Psi(e_{\{\{1,\dots, \nrm{T_1}\}\}} \otimes \cdots \otimes e_{\{\{1,\dots, \nrm{T_b}\}\}}) \). Then,
\begin{equation}
    \tensordot(v, \Psi(e_{\{\{1,\dots, \nrm{T_1}\}\}} \otimes \cdots \otimes e_{\{\{1,\dots, \nrm{T_b}\}\}}) )_{\underbrace{i_1\dots i_1}_{\nrm{T''_1} \text{ times}} \dots \underbrace{i_b\dots i_b}_{\nrm{T''_b} \text{ times}}} = v _{\underbrace{i_1\dots i_1}_{\nrm{T'_1} \text{ times}} \dots \underbrace{i_b\dots i_b}_{\nrm{T'_b} \text{ times}}}. 
\end{equation}
Clearly, this is essentially an indexing operation.\footnote{In other words, rather than performing \(n^{\sum \nrm{T'_i}} \) multiplications and additions, we are just copying arrays.}

\section{Examples}
\label{sec:examples}
We look at the case where \(m = m' = 1 \) and \(n \geq 2 \). Here the space of
permutation equivariant linear layers is \(\big((\RR^n)^{\otimes 2} \big)^{\Sigma_n}\),
i.e. matrices invariant under simultaneous permutation of rows and columns, and
these corresponding to equivariant maps \(\RR^n \to \RR^n\) by matrix-vector
multiplication. 

\subsection{Partitions associated with index tuples}
The simplest case of \cref{lem:bell-nos} occurs when \(l = 2 \). Here, given \( (i_1, i_2) \in \{1,\dots, n\}^2 \) the associated set partition is 
\begin{equation}
    \Pi(i_1, i_2) = \begin{cases}
    \{ \{1,2\} \} & \text{ if } i_1 = i_2 \\
    \{ \{1\},\{2\} \} & \text{ if } i_1 \neq i_2 
    \end{cases}
\end{equation}
Let \(\cP_1 = \{ \{1,2\} \}\) and \(\cP_2 = \{ \{1\},\{2\} \}\). Then viewing tensors in \(\RR^n \otimes \RR^n \) as \(n \times n \) matrices, 
\begin{equation}
    \begin{split}
        e_{\cP_1} &= \sum_i e_i \otimes e_i = I_n \text{  and} \\
        e_{\cP_2} &= \sum_{i \neq j} e_i \otimes e_j =  \mathbf{1}\mathbf{1}^T - I 
    \end{split}
\end{equation}
where \(I\) is the $n \times n$ identity matrix and \(\mathbf{1} = [1,\dots, 1]^T\) the \(1\)s
vector.

\subsection{Ranks of basis vectors}
Observe that \(\mathbf{1}\mathbf{1}^T - I\) is \emph{not} a Kronecker
product \(uv^T \). Indeed, 
it has rank \(n \),
since $ det(-I + \mathbf{1}\mathbf{1}^T) =  (1+\mathbf{1}^T (-I) \mathbf{1}) det(I) = (1-n)n \neq 0$, 
whereas any \(uv^T \) has rank 1. 

On the other hand, the
basis of \cref{thm:alt-basis} is
\begin{equation}
    d_{\cP_1} = I \text{  and  } d_{\cP_2} = \mathbf{1}\mathbf{1}^T 
\end{equation}
and \(\mathbf{1}\mathbf{1}^T\) is of course rank 1. This is of course the basis
used in DeepSets \citep{zaheerDeepSets2018} (\(I \) is the identity map,
\(\mathbf{1}\mathbf{1}^T \) is the vector sum multiplied by \(\mathbf{1}\)).

\end{document}